\newtheorem{theorem}{Theorem}
\title{\LARGE \bf
Backward Imitation and Forward Reinforcement Learning via Bi-directional Model Rollouts
}
\author{Yuxin Pan$^{1}$ and Fangzhen Lin$^2$ % <-this % stops a space
\thanks{$^1$The author is with the Division of Emerging Interdisciplinary Areas under IPO, The Hong Kong University of Science and Technology, Clear Water Bay, Hong Kong. (e-mail: {\tt\small yuxin.pan@connect.ust.hk})} % <-this % stops a space
\thanks{$^2$The author is with the Department of Computer Science and Engineering, The Hong Kong University of Science and Technology, Clear Water Bay, Hong Kong. (e-mail: {\tt\small flin@cs.ust.hk})} % <-this % stops a space
}
\begin{document}

\maketitle
\thispagestyle{empty}
\pagestyle{empty}

%\thispagestyle{fancy}
%\lhead{\textbf{2022 IEEE/RSJ International Conference on Intelligent Robots and Systems (IROS)}}
%\renewcommand{\headrulewidth}{0pt}
%%%%%%%%%%%%%%%%%%%%%%%%%%%%%%%%%%%%%%%%%%%%%%%%%%%%%%%%%%%%%%%%%%%%%%%%%%%%%%%%
\begin{abstract}

Traditional model-based reinforcement learning (RL) methods generate forward rollout traces using the learnt dynamics model to reduce interactions with the real environment. The recent model-based RL method considers the way to learn a backward model that specifies the conditional probability of the previous state given the previous action and the current state to additionally generate backward rollout trajectories. However, in this type of model-based method, the samples derived from backward rollouts and those from forward rollouts are simply aggregated together to optimize the policy via the model-free RL algorithm, which may decrease both the sample efficiency and the convergence rate. This is because such an approach ignores the fact that backward rollout traces are often generated starting from some high-value states and are certainly more instructive for the agent to improve the behavior. In this paper, we propose the backward imitation and forward reinforcement learning (BIFRL) framework where the agent treats backward rollout traces as expert demonstrations for the imitation of excellent behaviors, and then collects forward rollout transitions for policy reinforcement. Consequently, BIFRL empowers the agent to both reach to and explore from high-value states in a more efficient manner, and further reduces the real interactions, making it potentially more suitable for real-robot learning. Moreover, a value-regularized generative adversarial network is introduced to augment the valuable states which are infrequently received by the agent. Theoretically, we provide the condition where BIFRL is superior to the baseline methods. Experimentally, we demonstrate that BIFRL acquires the better sample efficiency and produces the competitive asymptotic performance on various MuJoCo locomotion tasks compared against state-of-the-art model-based methods.

\end{abstract}

%%%%%%%%%%%%%%%%%%%%%%%%%%%%%%%%%%%%%%%%%%%%%%%%%%%%%%%%%%%%%%%%%%%%%%%%%%%%%%%%
\section{INTRODUCTION}

In recent years, with the tremendous advances of deep learning, the model-free reinforcement learning (RL) that utilizes the neural network to represent the policy or the value function has made remarkable progress~\cite{mnih2015human,schulman2015trust,haarnoja2018soft}, and can potentially automate various applications, such as autonomous driving and robotics manipulation. However, the model-free RL algorithm requires extensive interactions with the environment to obtain large amounts of data for policy optimization, which thus restricts it to mostly controllable simulated environments. In contrast, model-based methods have been found to be able to learn a near-optimal policy without enormous number of interactions with the environment~\cite{deisenroth2013survey}. This is because once a reliable model that is capable of precisely constructing the running mechanism of the environment is learnt, it can be used as a virtual environment to further drive the learning process, consequently improving the sample efficiency. But the downside of the model-based RL method is that it is often inferior to the model-free counterpart in terms of asymptotic performance.

Traditionally, in model-based RL methods, the agent interacts with the learnt transition function with uncertainty from the current state by executing a policy to generate the imaginary traces for policy optimization (called rollout). More recently, Goyal {\it et al.} \cite{goyal2018recall} proposed an impressive backward model that estimates the conditional probability of the previous state given the previous action and the current state to recall backward rollout traces, and showed that in many domains, especially those with sparse high-value states, this approach can acquire the improvement in terms of sample efficiency. Lai {\it et al.} \cite{lai2020bidirectional} then proposed a so-called bi-directional model-based policy optimization (BMPO) algorithm that uses both the traditional forward model (transition function) and the backward model to generate bi-directional rollout samples that reach into both the past and the future. However, such a model-based method simply aggregates the bi-directional model-generated data together to reinforce the policy via the model-free RL algorithm, which may cause the low sample efficiency and the slow convergence rate. This is because it is ignored that the traces derived from backward rollouts typically start from some high-value states and consequently can supervise the learning of agent like the expert demonstrations. To put it another way, the agent can efficiently learn to reach the high-value states through directly imitating the traces from backward rollouts, and then explore from those reached states for the future novel experiences to reinforce the policy. We thus believe that the bi-directional model rollout samples should be treated differently for the efficient policy learning.

To this end, we propose the backward imitation and forward reinforcement learning (BIFRL) framework where backward rollout traces generated from the presumed valuable states are used as expert demonstrations for the agent imitation of excellent behaviors, and then the interactions with the forward model are conducted to produce forward rollout traces for policy reinforcement. Specifically, BIFRL requires to generate high-value states as the starting points for the bi-directional model rollouts. From these states, backward rollouts are performed to generate the traces back to some initial states. These traces are then used like those from expert demonstrations to train the agent using imitation learning. Afterward, BIFRL conducts forward rollouts from these states and the resulting samples are utilized to further reinforce the policy using the RL algorithm. As a result, BIFRL empowers the agent to both reach to and explore from the valuable states to accelerate its policy learning and reduce the extensive interactions. Thus it has the potential to limit the damage to the robot and its environment if being applied in the real word, making it more applicable for real-robot learning. Additionally, the valuable states are indeed rare, and should be leveraged effectively and efficiently. Thus, a value-regularized generative adversarial network (GAN) is developed to discover more those particular useful states as state augmentation without the need of state labelling. Furthermore, we theoretically provide the condition where BIFRL shows the superiority against the baselines. Experimentally, we demonstrate that compared with previous state-of-the-art model-based models, BIFRL achieves higher sample efficiency and produces competitive asymptotic performance on various MuJoCo locomotion benchmark tasks~\cite{todorov2012mujoco}. 

\section{Related Work}

Compared with model-free RL, model-based RL is often more sample efficient given that it can generate simulated samples using the learned dynamics model. However, this often comes in the expense of the asymptotic performance as the learnt model may not be accurate. Previous works have tried to mitigate the compounding errors coming from the multi-step model rollouts by tuning the ratio of real to model-generated samples~\cite{mishra2017prediction,whitney2018understanding,wu2019model}. Nguyen \emph{et al.}~\cite{Nguyen2018ImprovingMR} and Xiao \emph{et al.}~\cite{xiao2019learning} considered different ways to estimate the cumulative model error by which the adaptive model-based planning horizon can be determined. MBPO~\cite{janner2019trust} generates truncated model rollouts branched from real states to cripple the influence of model error and provides the condition for the return improvement in the true dynamics. BMPO~\cite{lai2020bidirectional} combines the backward model and the forward model for bi-directional branched rollouts, which performs better asymptotically in both theory and practice. Our approach also utilizes the branched model rollout strategy to limit the compounding error as much as possible.

Several models try to integrate the backward model with RL algorithms. Goyal \emph{et al.}~\cite{goyal2018recall} and Edwards \emph{et al.}~\cite{edwards2018forward} used an imitation learning approach to additionally train the policy on the samples from the backward model. However, these methods are more like model-free RL and do not utilize the forward model to reduce the sample complexity. BMPO~\cite{lai2020bidirectional} uses both the backward and forward models for bi-directional model rollouts. Again, this approach treats the samples from backward rollouts and those from forward rollouts the same, resulting in impairing the efficiency.

Imitation learning is able to make RL model jump start to avoid the so-called cold-start~\cite{liang2018cirl,pan2020navigation}. The idea is that the experts will first demonstrate how to achieve a task, and from which an initial policy will be obtained using imitation learning. This assumes that there are experts who can perform the recordable and understandable demonstrations for the agent. Our use of imitation learning on the backward rollout data is inspired by this approach. In our approach, the backward model undertakes the role of experts, and backward rollouts directly generate the annotated data.

The high-value state can be defined by the estimated expected return or the received immediate reward under the current policy. But in most cases, it is infrequently encountered, and is definitely significant to launch model rollouts. BMPO~\cite{lai2020bidirectional} employs previously experienced states which are prioritized according to the state values. Goyal \emph{et al.}~\cite{goyal2018recall} relied on the learned Goal GAN~\cite{florensa2018automatic} to produce the starting states for backward rollouts, which however requires the annotations of high-value states as training data. By comparison, in our framework, a value-regularized GAN is introduced for state augmentation without the need of annotation, which forces the generator to directly output the critic-recognised valuable states.

\section{Preliminaries}

We formalize the RL problem using Markov Decision Process (MDP) that is described by the tuple $(\mathcal{S}, \mathcal{A}, \mathrm{T}, r, \gamma, \rho_{0})$. At each time step $t$, the agent in the state $s_t\in{\mathcal{S}}$ executes the action $a_t\in{\mathcal{A}}$, receives the reward $r_t=r(s_t, a_t)$, and moves to the next state $s_{t+1}$ according to the unknown transition function $\mathrm{T}: S \times A \to S$. $\rho_{0}$ denotes the initial state distribution. The goal of reinforcement learning is to find an optimal policy $\pi^{*}(a_t|s_t)$ that maximizes the expected sum of discounted rewards (expected return) as:
\begin{equation}
\pi^{*} = \mathop{\arg\max}_{\pi}\eta[\pi] = \mathop{\arg\max}_{\pi} \mathbb{E}_{\pi}[\sum\limits_{t=0}^{\infty}\gamma^t r(s_t, a_t)],
\end{equation}
where $\gamma \in [0, 1]$ is the discounted factor.

The transition function is assumed to be unknown in the model-free RL. In contrast, the Dyna-style algorithm~\cite{sutton1991dyna} that is one branch of the model-based paradigms learns a dynamics model (forward model) $p_{\theta}(s_{t+1}|s_{t}, a_{t})$ parameterized by $\theta$ as the true dynamics from agent interactions with the environment by conducting the policy $\pi_{\phi}(a_t|s_t)$ parameterized by $\phi$. In addition, in BIFRL, a backward model $\tilde{p}_{\theta^{\prime}}(s_{t-1}|s_{t},a_{t-1})$ and a backward policy $\tilde{\pi}_{\phi^{\prime}}(a_{t-1}|s_{t})$ are constructed for backward rollouts, which are parameterized by $\theta^{\prime}$ and $\phi^{\prime}$ respectively.

\begin{figure*}[t]
	\centering
	\includegraphics[width=1\textwidth]{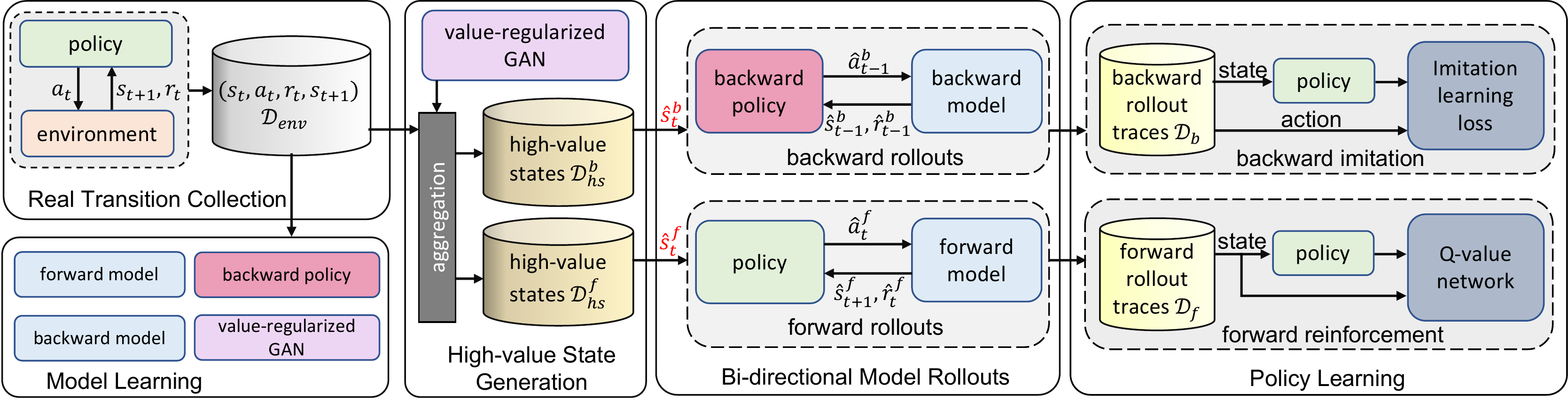}
	\caption{The overall architecture of BIFRL framework. The real transitions in the replay buffer $\mathcal{D}_{env}$ from agent interactions with the environment are collected for the model learning and the high-value state generation. In addition, the high-value state buffers are augmented by the samples from the learnt value-regularized GAN. Starting from the sampled valuable states, the bi-directional model rollouts are executed to collect the diverse simulated data for backward imitation of behavior and forward reinforcement of policy.}
	\label{fig1}
\end{figure*}

\section{Method}

In this section, we detail BIFRL algorithm that leverages the simulated samples from backward model rollouts in a different way with those generated from interactions with the forward model by performing the policy. Consequently, the agent is able to imitate excellent behaviors from backward rollout traces to reach the high-value states and reinforce its policy through explorations starting from those high-value states. Besides, BIFRL is built on the Model-based Policy Optimization (MBPO)~\cite{janner2019trust} algorithm that is one of state-of-the-art Dyna-style model-based methods and investigates forward truncated model rollouts. 

\subsection{BIFRL Framework}
Figure~\ref{fig1} illustrates the overview of BIFRL algorithm that consists of five indispensable and interdependent stages. The algorithm actually iterates between those stages to progressively improve the policy. In the real transition collection stage, the agent collects the data for the replay buffer $\mathcal{D}_{env}$ through interactions with the environment. In the model learning stage, the models including forward model, backward model, backward policy and value-regularized GAN are trained using the data in $\mathcal{D}_{env}$. In the high-value state generation stage, the states in $\mathcal{D}_{env}$ and those sampled from the value-regularized GAN are aggregated together, and then are prioritized and sampled for the backward imitation high-value state buffer $\mathcal{D}_{hs}^{b}$ and the forward reinforcement high-value state buffer $\mathcal{D}_{hs}^{f}$. In the bi-directional model rollouts stage, backward rollouts are performed starting from the states $\hat{s}^{b}_{t}$ randomly chosen from $\mathcal{D}_{hs}^{b}$ to collect the samples for the backward rollout data buffer $\mathcal{D}_{b}$. Meanwhile, forward rollouts start from the states $\hat{s}^{f}_{t}$ chosen from $\mathcal{D}_{hs}^{f}$ to gather the rollout transitions which are stored in the forward rollout data buffer $\mathcal{D}_{f}$. In the policy learning stage, the state-action pairs in $\mathcal{D}_{b}$ are regarded as expert demonstrations for the agent to imitate excellent behaviors, and the transitions from $\mathcal{D}_{f}$ are utilized to reinforce the policy via RL algorithm. Thus, the algorithm runs those five processes in cycles leading to the coupling between them. The BIFRL algorithm is presented in Algorithm~\ref{algorithm}. 

\textbf{Bi-directional Dynamics Models.} The architecture choices of bi-directional dynamics models are crucial for the policy optimization on the model-generated data, since the compounding model error can severely degenerate the multi-step rollouts. Therefore, in BIFRL the ensemble of bootstrapped probabilistic dynamics model~\cite{chua2018deep} is adopted for both the forward and backward models to capture two kinds of uncertainty. The one called aleatoric uncertainty arising from the inherent stochasticity of a system can be captured by each probabilistic model, and the other called epistemic uncertainty caused by the lack of sufficient training data can be remedied by the ensemble of models.  

To be specific, each of the forward model ensembles $\{p^{i}_{\theta}\}_{i=1}^{B}$ is represented by a neural network, and we use $\theta$ to denote the parameters of forward model ensembles. Each individual model takes the state $s_t$ and the action $a_t$ as inputs, and outputs the parameters (\emph{i.e.} means $\mu$ and diagonal covariances $\Sigma$) of the Gaussian distribution of the state variation $\Delta s_{t}=s_{t+1}-s_{t}$, denoted as:
\begin{equation}
    p_{\theta}^{i}(\Delta s_{t}|s_t,a_t)=\mathcal{N}(\mu_{\theta}^{i}(s_t,a_t), \Sigma_{\theta}^{i}(s_t,a_t)).
\end{equation}
Likewise, the same reparameterization technique is put to use for the trustworthy backward model, where each individual neural network with $\theta^{\prime}$ of the model ensembles yields the parameters of a Gaussian distribution written as:
\begin{equation}
    \tilde{p}_{\theta^{\prime}}^{i}(\Delta s_{t-1}|s_t,a_{t-1})=\mathcal{N}(\mu_{\theta^{\prime}}^{i}(s_t,a_{t-1}), \Sigma_{\theta^{\prime}}^{i}(s_t,a_{t-1})). 
\end{equation}
Note that these models can estimate the reward distribution as well, which is omitted for brevity.

\textbf{Backward Policy.} Relying on the learnt backward model, the previous state can be estimated given the current state and the previous action, which arouses a confusion about how to choose the previous actions for backward model rollouts. Thus, analogous to the setting of forward rollouts where the actions are naturally taken by the policy $\pi_{\phi}(a_t|s_t)$, we need the backward policy to predict the action $a_{t-1}$ given the state $s_t$. Since the multiple backward traces are required to be simultaneously generated from one starting state, the powerful probabilistic neural network with parameter $\phi^{\prime}$ is employed to output the parameters of the Gaussian distribution of the action $a_{t-1}$:
\begin{equation}
    \tilde{\pi}_{\phi^{\prime}}(a_{t-1}| s_t)=\mathcal{N}(\mu_{\phi^{\prime}}(s_t), \Sigma_{\phi^{\prime}}(s_t))
\end{equation}

\textbf{Value-regularized GAN.} In order to increase the quantity of high-value states, we propose a generative model called value-regularized GAN, which aims to generate the states that can be criticized as the high-value ones under the current policy by the critic and cannot move out of the real-state distribution. Unlike prior works~\cite{florensa2018automatic, goyal2018recall} that rely on the state labelling to train the Goal GAN, the value-regularized GAN is strengthened by the value regularization by which the generator $G(\cdot)$ treats maximum state value computed by the critic as another optimization objective besides the one defined by the discriminator $D(\cdot)$, resulting in removing the need for annotation. Additionally, due to the initially unstable learning of Q-value network $Q^{\pi}(\cdot)$, we consider the way to enable the generator to produce the high-reward states under the latest policy, which owes to the learnt dynamics model $p_{\theta}(\cdot)$ that can make reliable one-step prediction of the reward $r$. The value regularization can be described as:
\begin{equation}
    \mathcal{L}_{v} = \alpha Q^{\pi}(G(z), \pi_{\phi}(G(z))) + (1-\alpha)r
\end{equation}
\begin{equation}
    r \sim \mathcal{N}(\mu_{\theta}(G(z), \pi_{\phi}(G(z))), \Sigma_{\theta}(G(z), \pi_{\phi}(G(z))))
\end{equation}
where $\alpha \in (0, 1]$ is the weighting factor gradually increasing with the increasing number of training steps. The architecture of the value-regularized GAN is shown in Figure~\ref{fig2}.

\textbf{State Sampling Strategies.} Since the state diversity benefits the exploration, we adopt to collect the states stored in $D_{env}$ and those generated from value-regularized GAN together, instead of directly using the GAN-generated data, so as to sample the high-value states for model rollouts. We design different applicable state sampling strategies for the diverse model rollouts, because of the different training schemes on those rollout data. In detail, for the backward rollouts, the state value is chosen as the criterion to measure the importance of each state. Generally, the value of each state can be accurately computed by summing up the rewards from the current state $s_t$ to the termination state $s_T$ along the trace ($\sum_{t^{\prime}=t}^{T}  r(s_{t^{\prime}}, \pi_{\theta}(s_{t^{\prime}})) $), which however is not computationally efficient. Thus, the learnt value function $V^{\pi}(\cdot)$ is used to estimate the value, which is accessible and effective. Thus, BIFRL greedily picks the top $\mathcal{K}$ percent of the prioritized states and stores them in $\mathcal{D}_{hs}^{b}$.

Due to the greedy prioritization that focuses on a small subset of the states, the temporal difference (TD) errors on those low-priority states shrink slowly when using function approximation, which makes the system prone to over-fitting. Therefore, for the forward rollouts, the stochastic sampling method is introduced to ensure the probability of being sampled is monotonic in a state's priority and guarantee a non-zero probability for the low-value state. To further alleviate the lack of diversity, we treat the absolute TD error $\delta(\cdot)$ of the transition that is from the state to be prioritized to the next state under the current policy as another criterion. Concretely, we use the Boltzmann distribution to model the probability of sampling state $s^j$ defined as: 
\begin{equation}\label{eq7}
    p(s^j) = \frac{e^{\beta V^{\pi}(s^j) + (1-\beta) \delta(s^j)}}{\sum_{k=1}^{n}e^{\beta V^{\pi}(s^k) + (1-\beta) \delta(s^k)}},
\end{equation}
\begin{equation}
    \delta(s^j) = |V^{\pi}(s^j) - (r^{j} + \gamma V^{\pi}(\Delta s^{j} + s^{j})|,
\end{equation}
\begin{equation}
\Delta s^{j},r^{j} \sim \mathcal{N}(\mu_{\theta}(s^j,\pi_{\phi}(s^j)), \Sigma_{\theta}(s^j,\pi_{\phi}(s^j)))
\end{equation}
where $j$ is the index of state in the state collection with size $n$, and $\beta \in (0, 1)$ is to balance those two criterions. The states sampled in this way are stored in $\mathcal{D}_{hs}^{f}$.
\begin{figure}[t]
	\centering
	\includegraphics[width=1\columnwidth]{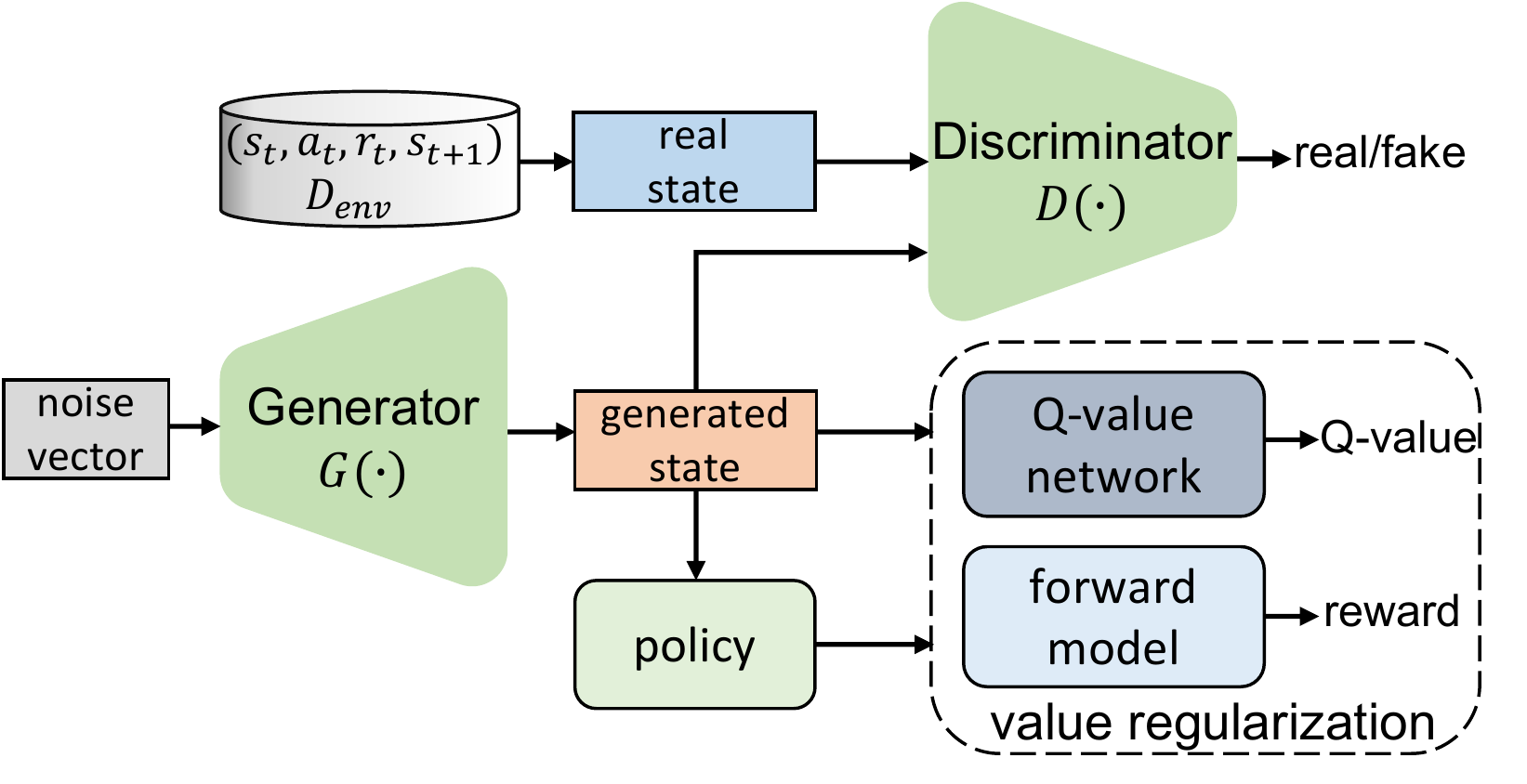}
	\caption{The architecture of value-regularized GAN.}
	\label{fig2}
\end{figure}

\subsection{Training}
In this part, we dedicate to introducing how to train the models in the model learning stage, and how to perform the policy learning to increasingly improve the agent's behavior.

\textbf{Bi-directional Dynamics Models.} In BIFRL, we choose the ensemble of bootstrapped probabilistic networks as the bi-directional dynamics models. In the training phase, each individual network of the ensembles is trained with different initializations and samples from the real transitions via maximizing likelihood. The corresponding loss function of the forward model can be written as:
\begin{equation}\label{eq11}
\begin{aligned}
	\mathcal{L}_{f}(\theta) = &- \sum\limits_{t=1}^{N}\log p_{\theta} (\Delta s_{t}| s_t, a_t) \\
	= &\sum\limits_{t=1}^{N}[\mu_{\theta}(s_t, a_t) - \Delta s_{t}]^{\top}\Sigma_{\theta}^{-1}(s_t, a_t) \\
	&[\mu_{\theta}(s_t, a_t) - \Delta s_{t}] + \log\det\Sigma_{\theta}(s_t, a_t)
\end{aligned}
\end{equation}
where $N$ is the total number of samples in $\mathcal{D}_{env}$. The loss function of the backward model can be further derived as:
\begin{equation}\label{eq12}
\begin{aligned}
	\mathcal{L}_{b}(\theta^{\prime}) = &- \sum\limits_{t=1}^{N} \log \tilde{p}_{\theta^{\prime}}(\Delta s_{t-1}|s_t, a_{t-1}) \\ = &\sum\limits_{t=1}^{N}[\mu_{\theta^{\prime}}(s_t, a_{t-1}) - \Delta s_{t-1}]^{\top}\Sigma_{\theta^{\prime}}^{-1}(s_t, a_{t-1}) \\
	&[\mu_{\theta^{\prime}}(s_t, a_{t-1}) - \Delta s_{t-1}] + \log\det\Sigma_{\theta^{\prime}}(s_t, a_{t-1})
\end{aligned}
\end{equation}

\textbf{Backward Policy.} The backward policy aims to make the backward rollouts resemble the real traces sampled by the current policy, which entails that it should be trained on the real data $(s_{t}, a_{t-1})$ by maximum likelihood estimation. Thus, the corresponding loss function is defined as:
\begin{equation}\label{eq13}
\begin{aligned}
    \mathcal{L}_{bp}(\phi^{\prime}) = & - \sum\limits_{t=1}^{N} \log \tilde{\pi}_{\phi^{\prime}} (a_{t-1}|s_t) \\
    = &\sum_{t=1}^{N} [\mu_{\phi^{\prime}}(s_{t})-a_{t-1}]^{\top}\Sigma_{\phi^{\prime}}^{-1}(s_{t}) \\
    &[\mu_{\phi^{\prime}}(s_{t})-a_{t-1}] + \log \det \Sigma_{\phi^{\prime}}(s_{t})
\end{aligned}
\end{equation}

\textbf{Value-regularized GAN.} In our proposed GAN, the generator $G(\cdot)$ parameterized by $\theta^{G}$ is trained to output the states which not only follow the real-state distribution in $\mathcal{D}_{env}$, but also are the critic-recognised valuable ones. The discriminator $D(\cdot)$ parameterized by $\theta^{D}$ is trained to distinguish the states in $\mathcal{D}_{env}$ from those being out of the presumed real-state distribution. The value regularized GAN is implemented on the basis of LSGAN~\cite{mao2017least}. The loss functions of generator and discriminator can be written as:
\begin{equation}\label{eq14}
    \mathcal{L}_{G}(\theta^{G}) = \mathbb{E}_{z\sim p_{z}(z)}[(D(G(z))-1)^2] - \lambda\mathcal{L}_{v}
\end{equation}
\begin{equation}\label{eq15}
    \mathcal{L}_{D}(\theta^{D}) = \mathbb{E}_{s\sim p_{s}(s)}[(D(s)-1)^2] + \mathbb{E}_{z\sim p_{z}(z)}[D(G(z))^2]
\end{equation}
where $p_{z}(z)$ is the standard normal distribution, $s$ is the real state following the real-state distribution $p_{s}(s)$, and $\lambda$ is the hyperparameter.

\textbf{Policy Learning.} The backward imitation of excellent behaviors and the forward reinforcement of policy are severally carried out on the data from $\mathcal{D}_{b}$ and $\mathcal{D}_{f}$. In detail, to imitate excellent behaviors from the traces leading to the high-value states, the policy is learnt on the simulated data in $\mathcal{D}_{b}$ via minimizing the loss function:
\begin{equation}\label{eq16}
	\mathcal{L_{I}}(\phi) = -\sum_{t=1}^{M}\log \pi_\phi(\hat{a}_{t}^{b}|\hat{s}_{t}^{b}),
\end{equation}
where the tuple $(\hat{s}_{t}^{b}, \hat{a}_{t}^{b})$ is sampled from $\mathcal{D}_{b}$ with size $M$.

After that, the agent should explore from the valuable states to further reinforce the policy. Thus, we optimize the policy on the samples in $\mathcal{D}_{f}$ through Soft Actor-Critic (SAC)~\cite{haarnoja2018soft} by minimizing the expected KL-divergence:
\begin{equation}\label{eq17}
    \mathcal{L}_{R}(\phi) = \mathbb{E}_{\hat{s}_{t}\sim D_{f}} [D_{KL}(\pi_{\phi} || \mathrm{exp}(Q^{\pi}-V^{\pi}))],
\end{equation}
where Q-value network $Q^{\pi}(\cdot)$ and state value network $V^{\pi}(\cdot)$ are trained via minimizing the variants of TD error. Please refer to~\cite{haarnoja2018soft} for the details of SAC algorithm.

\section{Theoretical Analysis}
\label{theoretical_analysis}
In this section, we theoretically provide the condition where BIFRL algorithm can produce the better asymptotic performance than previous methods, which is also instructive for hyperparameter settings in experiments. On the one hand, prior works~\cite{janner2019trust, lai2020bidirectional} have studied the discrepancy between the expected return in the actual environment and that in the branched rollouts, which is applicable for the analysis on the forward rollout return in BIFRL. On the other hand, unlike those works, BIFRL views backward rollout traces $\tau^{b}=[\hat{s}_{T}^{b}, \hat{a}_{T-1}^{b}, ..., \hat{a}_t^{b}, \hat{s}_t^{b}]$ as expert demonstrations $\tau^{e}=[\hat{s}_t^{b}, \hat{a}_t^{b}, ..., \hat{a}_{T-1}^{b}, \hat{s}_{T}^{b}]$ to train the policy, and assumes those expert samples are sampled from the interactions with a presumed model $p^{e}(s_{t+1}|s_{t}, a_{t})$ by executing a presumed expert policy $\pi^{e}(a_{t}|s_{t})$. We thus investigate the discrepancy between the return from the current policy $\pi$ interactions with the real dynamics $p$, denoted as $\eta[\pi]$, and the return from the expert policy $\pi^{e}$ interactions with the presumed model $p^{e}$, denoted as $\eta[\pi^{e}]$ . Let $k_b$ be the length of backward rollouts, $k_f$ be length of forward rollouts.

\begin{theorem} 
\label{theorem1}
We assume that the expected total variation distance between $\pi$ and $\pi^{e}$ at each timestep t is bounded as $\epsilon_{\pi} = \max_{t}E_{(s,a) \sim \pi_{t}, p}[D_{TV}(\pi_{t}^{e}(a|s)||\pi_{t}(a|s))]$, and the expected total variation distance between $p$ and $p^{e}$ at each time step t is bounded as $\epsilon_{m} = \max_{t}E_{(s,a,s^{\prime}) \sim \pi_{t}, p^{e}}[D_{TV}(p^{e}(s^{\prime}|s,a)||p(s^{\prime}|s,a))]$. Let $r_{\mathrm{max}}$ denote the maximum reward. Then the upper bound of the discrepancy between $\eta[\pi]$ and $\eta[\pi^{e}]$ is:
\begin{equation}
    |\eta[\pi] - \eta[\pi^{e}]| \leq \frac{2r_{\mathrm{max}}(1-\gamma^{k_b+1})}{1-\gamma}(\epsilon_{\pi}+k_b(\epsilon_{\pi}+\epsilon_{m}))
\end{equation}
\end{theorem}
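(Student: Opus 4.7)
The plan is to adapt the branched-rollout simulation lemmas of Janner \emph{et al.} and Lai \emph{et al.} to the comparison between the on-policy return in the true dynamics and the return of the presumed expert policy in the presumed model. Specifically, I would use a telescoping argument that controls the infinite-horizon return gap by a discounted sum of per-step reward expectation gaps, each of which is bounded by the total variation distance between the state--action occupancy distributions induced by $(\pi,p)$ and $(\pi^{e},p^{e})$.

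In more detail, the main steps would be as follows. First, write
$$|\eta[\pi]-\eta[\pi^{e}]| \;\leq\; \sum_{t\geq 0}\gamma^{t}\bigl|\mathbb{E}_{\pi,p}[r(s_t,a_t)]-\mathbb{E}_{\pi^{e},p^{e}}[r(s_t,a_t)]\bigr|,$$
and upper bound each term by $2r_{\max}\,D_{TV}(\mu_t^{\pi},\mu_t^{\pi^{e}})$, where $\mu_t^{\pi}$ is the joint state--action distribution at time $t$ under $(\pi,p)$ and $\mu_t^{\pi^{e}}$ similarly under $(\pi^{e},p^{e})$. Second, decompose the joint TV into the marginal state TV plus a conditional-action term controlled by $\epsilon_{\pi}$. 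Third, establish the one-step recursion
$$D_{TV}(d_t^{\pi},d_t^{\pi^{e}}) \;\leq\; D_{TV}(d_{t-1}^{\pi},d_{t-1}^{\pi^{e}}) + \epsilon_{\pi}+\epsilon_{m},$$
using $\epsilon_{\pi}$ for the policy-induced drift and $\epsilon_{m}$ for the model-induced drift of the one-step transition kernel, which by induction (with a matched initial distribution) yields $D_{TV}(d_t^{\pi},d_t^{\pi^{e}})\leq t(\epsilon_{\pi}+\epsilon_{m})$. Fourth, argue that the relevant discrepancy lives only over the backward-rollout window $t\in\{0,\dots,k_b\}$, since $\pi^{e}$ and $p^{e}$ coincide with $\pi$ and $p$ outside that window (the expert trace only has $k_b$ transitions). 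Finally, apply the crude upper bound $t\leq k_b$ inside the window so that the surviving sum factors as
$$2r_{\max}\sum_{t=0}^{k_b}\gamma^{t}\bigl(\epsilon_{\pi}+k_b(\epsilon_{\pi}+\epsilon_{m})\bigr) \;=\; \frac{2r_{\max}(1-\gamma^{k_b+1})}{1-\gamma}\bigl(\epsilon_{\pi}+k_b(\epsilon_{\pi}+\epsilon_{m})\bigr),$$
matching the claimed bound.

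The main obstacle is making the reinterpretation of the reversed backward rollout as a forward expert trajectory precise enough that $\epsilon_{\pi}$ and $\epsilon_{m}$ indeed control, respectively, the conditional action gap and the one-step transition gap under $(\pi^{e},p^{e})$; without this, the recursion above would not close. A secondary, more mechanical point is justifying the truncation at $t=k_b$: I would argue that outside the rollout window the presumed expert dynamics and policy agree with $(\pi,p)$, so the per-step discrepancy is zero and the infinite-horizon sum collapses to a truncated one. The loose substitution $t\leq k_b$ in the final step is what yields the clean factored form stated in the theorem, at the (intended) cost of replacing $\sum_{t=0}^{k_b}t\gamma^{t}$ by $k_b\sum_{t=0}^{k_b}\gamma^{t}$.
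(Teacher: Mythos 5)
Your proposal follows essentially the same route as the paper's proof: express the return gap via the discounted sum of per-step occupancy discrepancies over $t=0,\dots,k_b$, bound each by $2r_{\max}$ times a total-variation distance, split the joint state--action TV into a marginal-state term plus a conditional-action term controlled by $\epsilon_{\pi}$ (Lemma B.1 of MBPO), run a one-step recursion adding $\epsilon_{\pi}+\epsilon_{m}$ per step, and close with the crude bound by $k_b$ and the geometric sum. The one substantive discrepancy is where you anchor the recursion: you anchor at $t=0$ via a ``matched initial distribution,'' but in this construction the distribution that provably matches is at the \emph{other} end, $t=k_b$, because the backward rollouts are launched from real states sampled from $\mathcal{D}_{hs}^{b}$; the $t=0$ end of the reversed expert trace is the model-generated terminus of the backward rollout and need not match anything. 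The paper accordingly sets $\xi_{k_b}=0$ and iterates backward, obtaining $\xi_t\le (k_b-t)(\epsilon_{\pi}+\epsilon_{m})$, which after the bound $k_b-t\le k_b$ yields the same final expression as yours --- so your arithmetic is fine, but you should flip the direction of the induction and its base case.
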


\begin{proof}
Please refer to Appendix for the proof.
\end{proof}

We observe that BMPO~\cite{lai2020bidirectional} ignores the variation distance between the forward policy and the backward policy (corresponding to $\epsilon_{\pi}$ in BIFRL), and consequently set $k_b=k_f$ to obtain the tighter return discrepancy bound, compared to MBPO~\cite{janner2019trust}. However, based on Theorem~\ref{theorem1}, $\epsilon_{\pi}$ cannot be eliminated in the return discrepancy. Thus, compared to forward rollouts that only have the model error, backward rollouts have both the model error and the policy divergence. In consequence, we propose the condition $k_b < k_f$ where BIFRL can perform better asymptotically. The experiments also support our viewpoints.

\section{Experiments}

The aims of our experimental evaluation are to answer two primary questions: 1) How does BIFRL algorithm perform on various benchmarks compared with state-of-the-art model-free and model-based methods? 2) what are the contributions of the critical components in BIFRL?

\subsection{Performance Comparisons}

In this section, we evaluate BIFRL and other methods on five MuJoCo continuous control tasks~\cite{todorov2012mujoco} where the agent requires to learn the moving forward gaits. The agents we used are Hopper, Walker2D and Ant. And we add two variants of Hopper and Walker2D without early termination states, denoted as Hopper-NT and Walker2D-NT~\cite{langlois2019benchmarking}. Additionally, we choose Pendulum task~\cite{brockman2016openai} to evaluate the performance of BIFRL on traditional control task. Please refer to~\cite{todorov2012mujoco,langlois2019benchmarking,brockman2016openai} for the environment settings.

\begin{figure*}[t]
	\centering
	\includegraphics[width=1\textwidth]{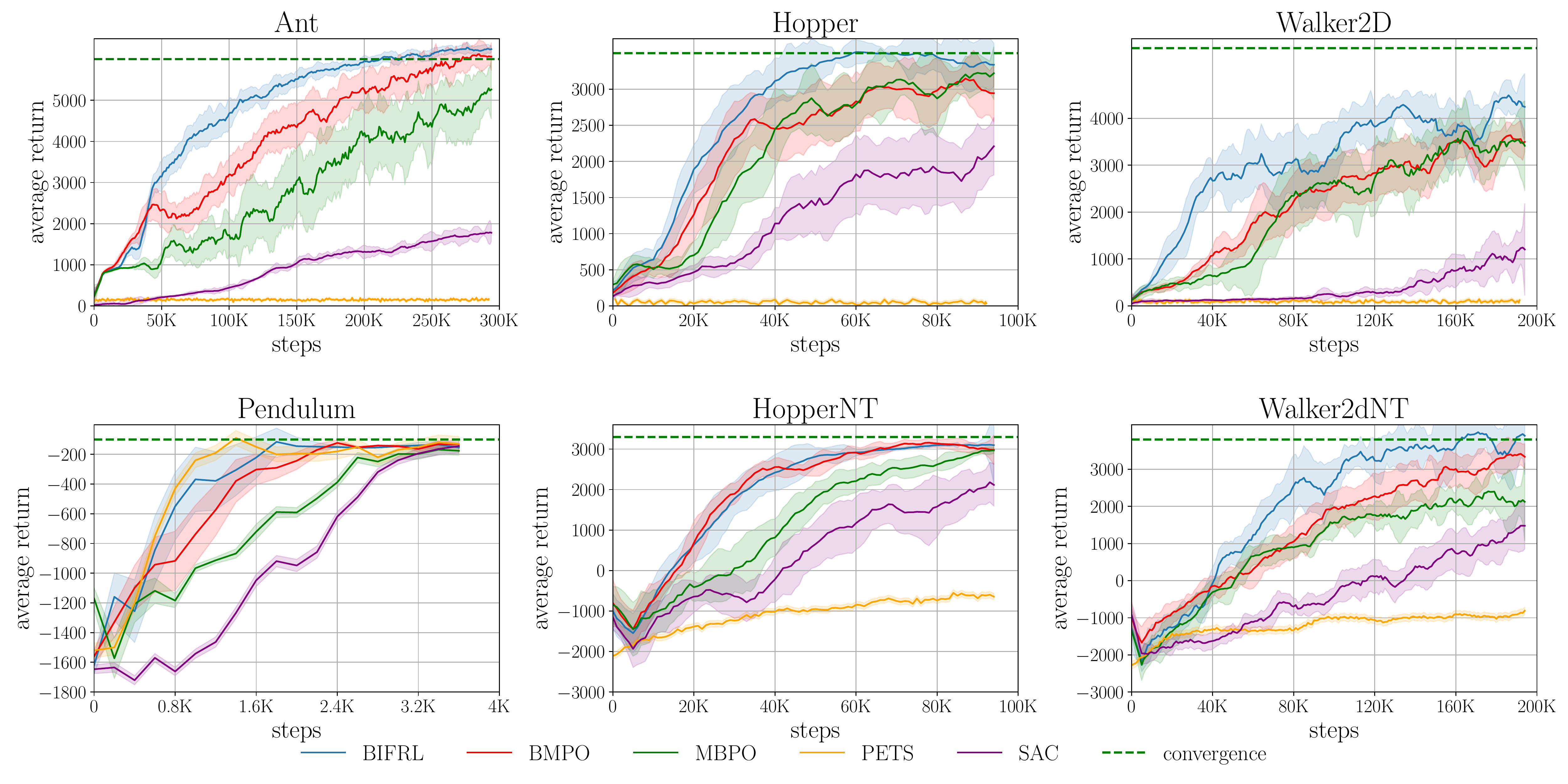}
	\caption{Learning curves of BIFRL and four baselines in different continuous control benchmarks. Solid lines indicate the mean and shaded areas depict the standard error of five trials with different random seeds in training. In each trial, all the algorithms are evaluated every 1000 environment steps (200 steps for Pendulum), and on each evaluation, the average return is computed over ten episodes. The dashed reference lines are the indications of asymptotic performance of SAC.}
	\label{main_expr}
\end{figure*}

We compare BIFRL algorithm with previous state-of-the-art algorithms in those six domains. To be specific, for model-based methods, we compare against MBPO~\cite{janner2019trust} that is the backbone model of our method, and BMPO~\cite{lai2020bidirectional} that treats the samples from backward rollouts in the same way as those from forward rollouts. We also compare to PETS~\cite{chua2018deep} that incorporates the learnt model ensembles into model predicative control (MPC)~\cite{camacho2013model} for planning. For model-free methods, we compare to SAC~\cite{haarnoja2018soft}, which is used for policy reinforcement in BIFRL. We do not compare with the backtracking model method~\cite{goyal2018recall} as it is not a purely model-based method and its improvements on both the performance and efficiency are limited compared with SAC. In addition, some previous model-based methods only involving the forward model rollouts (\emph{e.g.} MAAC~\cite{clavera2019model}, MoPAC~\cite{morgan2021model}) can be integrated with BIFRL. It is unfair to directly compare BIFRL against those model-based methods, and the compatibility of BIFRL with those baselines can be researched in future works. In experiments, we use the default hyperparameter settings in both the baselines and the backbone model of BIFRL for fair comparison. And the unique hyperparameters of BIFRL are included in Appendix.

The learning curves of all the methods are shown in Figure~\ref{main_expr}. We observe that in the four domains, including Ant, Hopper, Walker2D and Walker2D-NT, BIFRL acquires better sample efficiency resulting in the faster convergence rate, and meanwhile to some extent improves the asymptotic performance. In Hopper-NT, BIFRL does not make considerable improvements. This is probably because the high-value states cannot be frequently encountered in this domain. And in Pendulum, PETS is superior to all the other model-based methods. One possible reason is that MPC-based methods are more applicable for the traditional control tasks.

More qualitative results are shown in our supplementary materials, which provides the qualitative comparisons among BIFRL, BMPO, MBPO and SAC.

\begin{figure*}[t]
	\centering
	\subfigure[Ablations]{\label{ablations}\includegraphics[width=0.5\columnwidth]{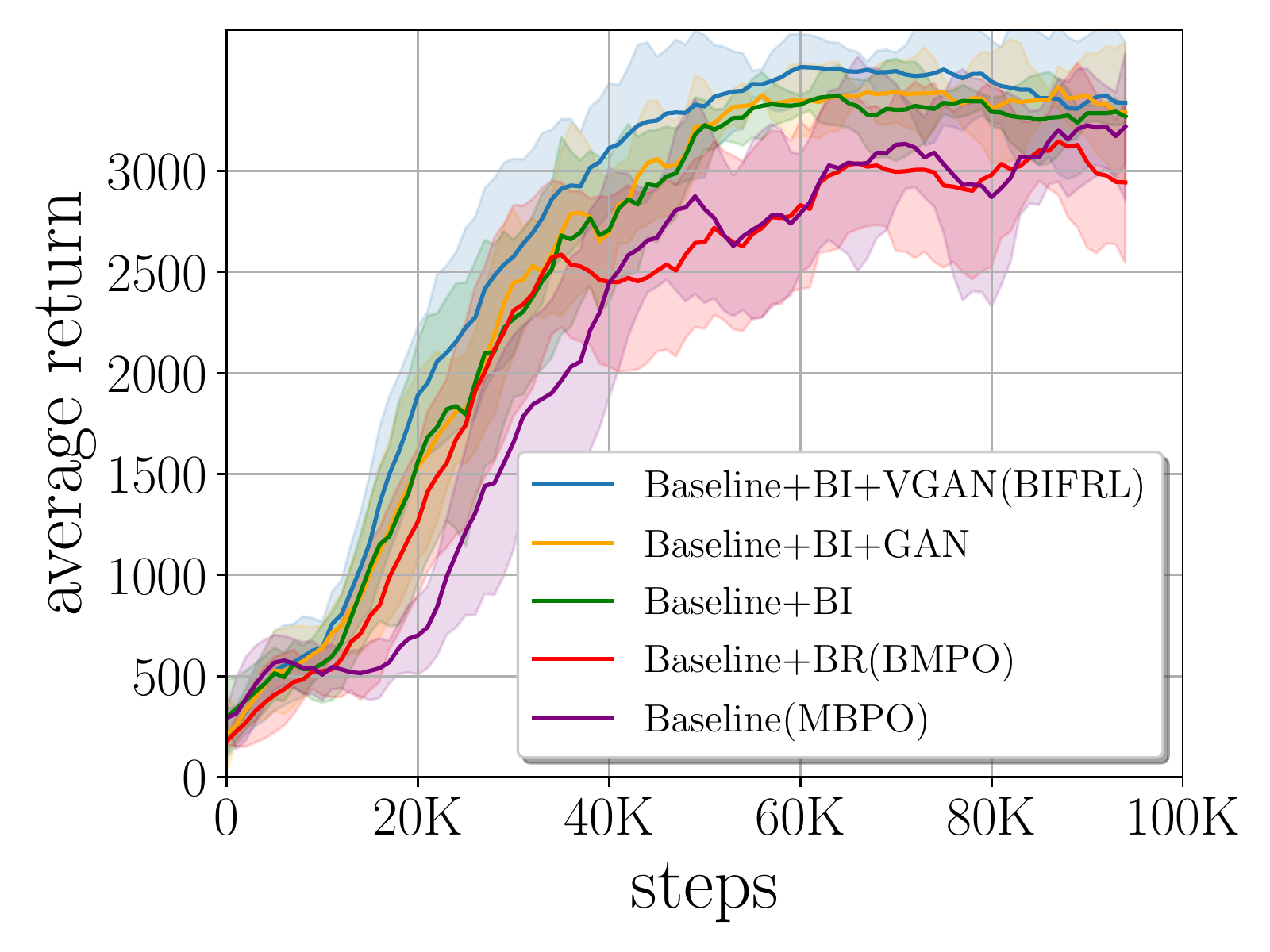}}
	\subfigure[Top $\mathcal{K}$ Percent of Samples ]{\label{top_samples}\includegraphics[width=0.5\columnwidth]{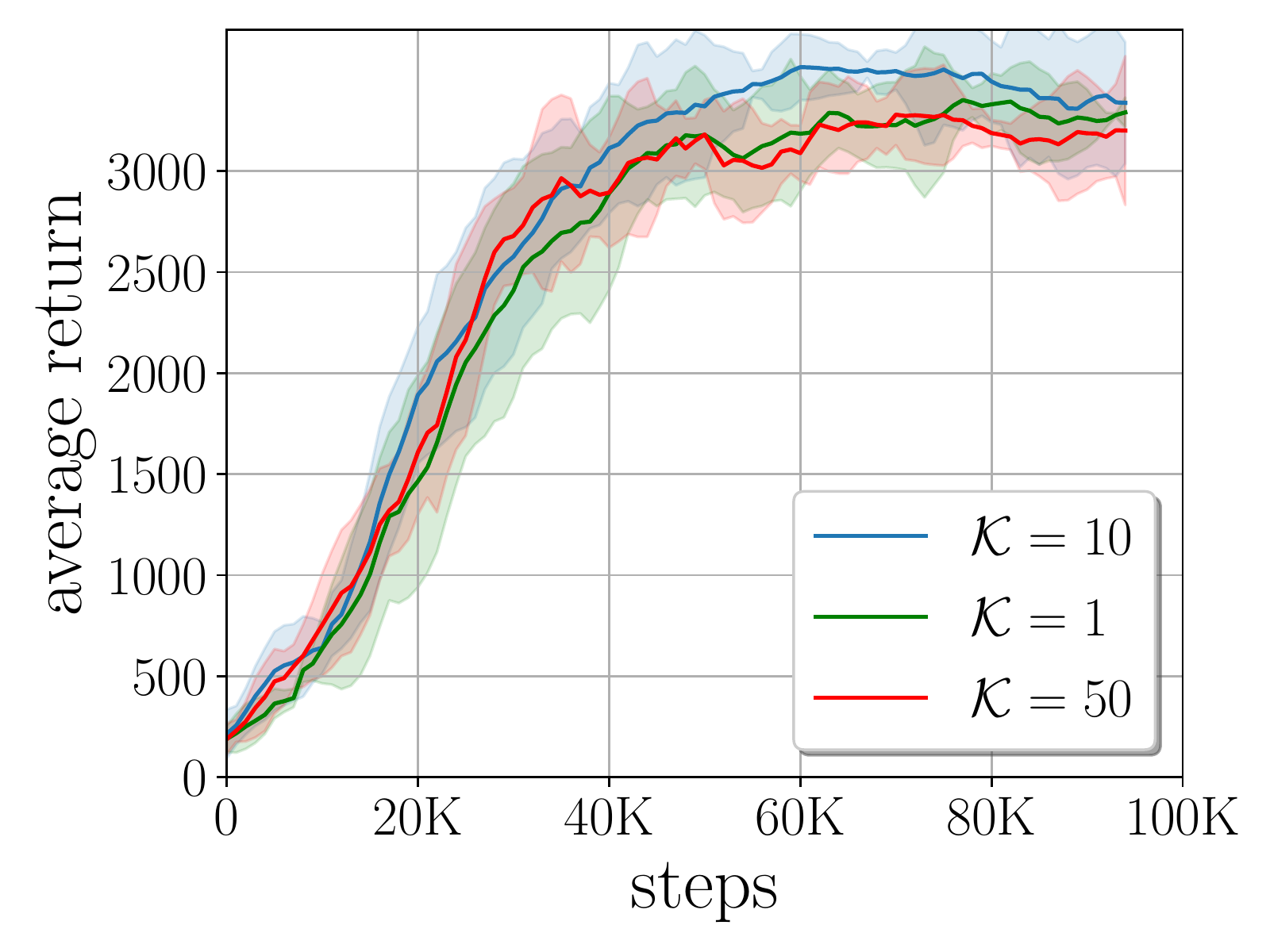}}
	\subfigure[Backward Length]{\label{backward_length}\includegraphics[width=0.5\columnwidth]{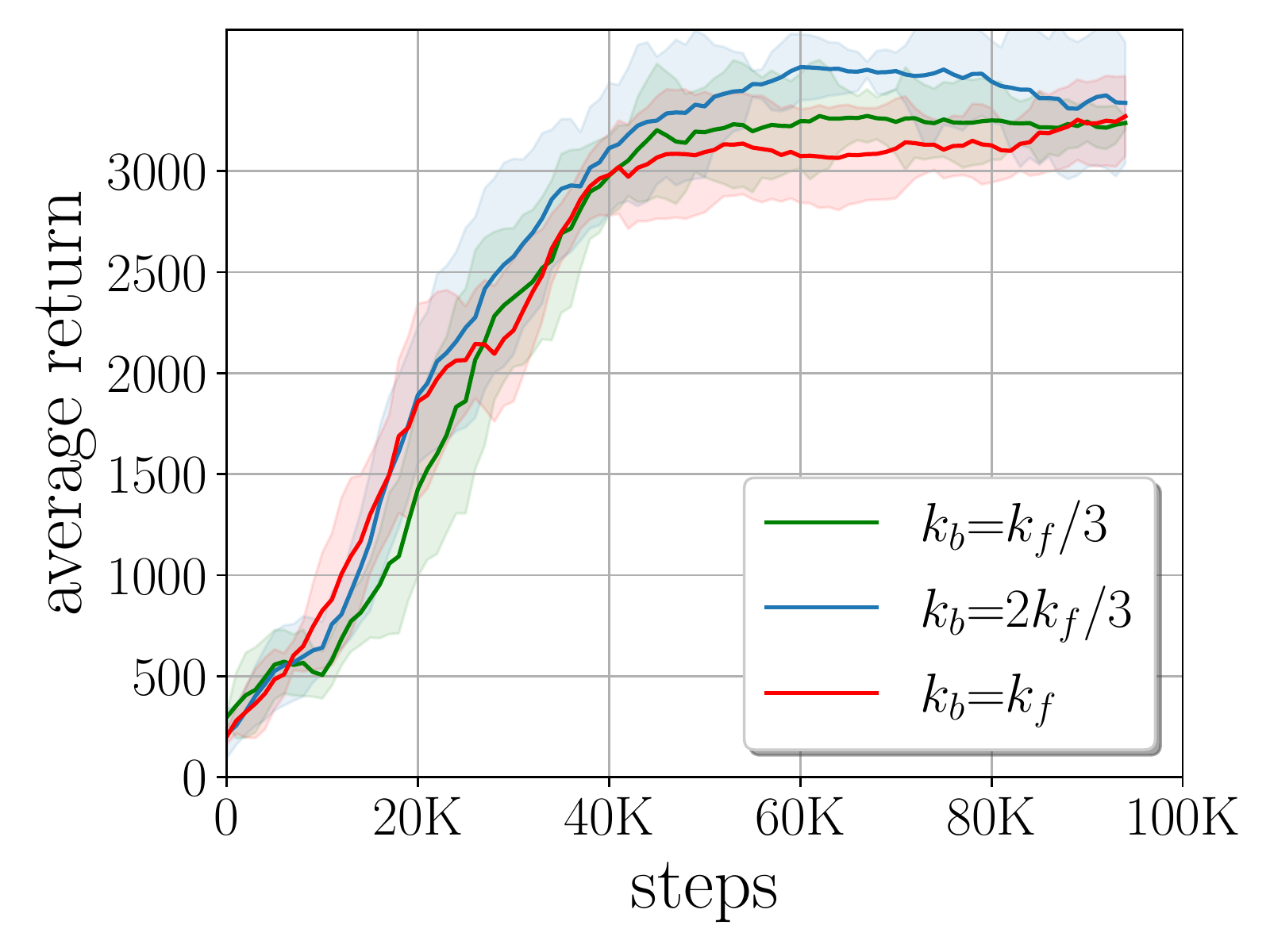}}
	\subfigure[Validation Loss]{\label{validation_loss}\includegraphics[width=0.5\columnwidth]{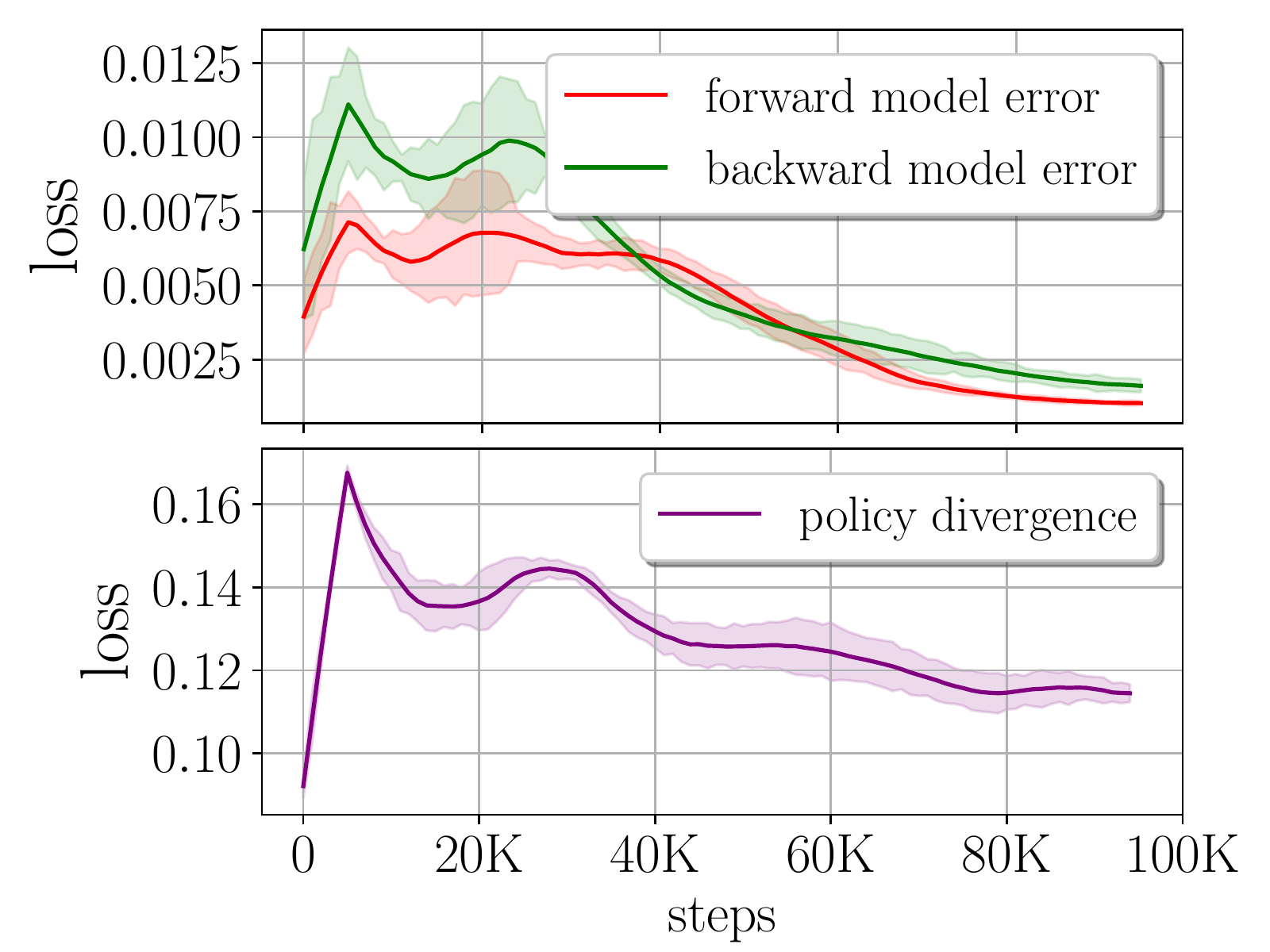}}
	\caption{Design evaluation of BIFRL on Hopper task. (a) Ablation study of two components: backward imitation (BI) and value-regularized GAN (VGAN). (b) The influence of $\mathcal{K}$ (the quantity of sampled high-value states for backward rollouts). (c) The influence of backward rollout length $k_b$ when fixing forward rollout length $k_f$. (d) The bi-directional model errors and policy divergence on validation set.}
	\label{extra_expr}
\end{figure*}

\subsection{Design Evaluation}

In this section, we investigate the contribution of each component and study the important hyperparameters of BIFRL on Hopper task.

\textbf{Ablation Study.}
As illustrated in Figure~\ref{ablations}, we conduct the ablation study to evaluate the contributions of two main components in BIFRL: 1) backward imitation that views backward rollout samples as expert demonstrations, denoted as BI; 2) value-regularized GAN that augments the high-value states, denoted as VGAN. To evaluate the importance of BI, we compare the performance among three models: 1) MBPO that optimizes the policy on forward rollout samples via RL algorithm (Baseline); 2) BMPO that adds the backward rollout samples to the baseline model (Baseline+BR); 3) the model that employs backward imitation (Baseline+BI). We observe that the baseline model with backward imitation convergences faster, which demonstrates that using backward rollout samples as expert demonstrations is more efficient than simply feeding those samples to RL algorithm. To characterize the importance of VGAN, we additionally choose two models: 1) using vanilla LSGAN~\cite{florensa2018automatic} to generate states (Baseline+BI+GAN); 2) using value-regularized GAN for high-value states (Baseline+BI+VGAN). We find that VGAN can contribute more to the performance in terms of convergence rate, which verifies the effectiveness of VGAN.

\textbf{Hyperparameter Study.}
We further investigate the important hyperparameters in our algorithm. Firstly, we conduct experiments with different $\mathcal{K}$ which indicates the quantity of sampled high-value states for backward rollouts. We set $\mathcal{K}=1, 10, 50$ respectively to study its influence. The results are shown in Figure~\ref{top_samples}. We find that the large number of sampled states can impair the performance. This probably because the larger $\mathcal{K}$ makes the high-value states sparse in $\mathcal{D}_{hs}^{b}$. We observe that too few high-value states also degrades the performance. Thus, it is crucial to choose an appropriate quantity of high-value states for backward rollouts.

The strategy of increasing rollout length linearly has shown the effectiveness in MBPO and BMPO. However, we require to study how to choose the backward rollout length $k_b$ in our algorithm when keeping the same forward rollout length $k_f$ as BMPO. In section~\ref{theoretical_analysis}, we derive the condition $k_b < k_f$ where BIFRL can perform better, we thus vary $k_b$ from 0 to $k_f$. As shown in Figure~\ref{backward_length}, setting $k_b=\frac{2}{3}k_f$ provides the better results, while neither too short or too long backward rollout length improves the performance. In practice, we use $k_b=\frac{2}{3}k_f$ in most domains except Walker2D and Walker2D-NT where $k_b$ is set to the same as $k_f$ due to $k_f=1$ in the original MBPO paper. Please refer to Appendix for the unique hyperparameters of BIFRL.

To prove that the policy divergence $\epsilon_{\pi}$ cannot be eliminated in Theorem~\ref{theorem1}, we choose the mean squared error (MSE) to measure both the bi-directional model errors and the policy divergence on validation set. Figure~\ref{validation_loss} shows the changes of these errors in the training phase. We notice that the policy divergence is two orders of magnitude larger than the bi-directional model errors at the end of training, which supports our view, \emph{i.e.} $\epsilon_{\pi}$ cannot be eliminated, and further experimentally verifies the validity of condition $k_b < k_f$.

\section{Conclusion}

In this paper, we propose a new model-based RL framework called backward imitation and forward reinforcement learning (BIFRL) framework where the agent views backward rollout traces as expert demonstrations to imitate the excellent behaviors from those traces, and uses forward rollout transitions to optimize the policy via RL algorithm. Theoretically, we provide the condition where BIFRL can produce the better asymptotic performance, and the experiment results actually support our view and derived condition in the analysis. Furthermore, BIFRL can empirically improves the sample efficiency and produces the competitive asymptotic performance on various locomotion tasks. For future work, we will investigate the compatibility of BIFRL with other model-based methods, and study its effectiveness in more real scenarios.

\section*{APPENDIX}

\subsection{Proof}
\begin{proof}
Firstly, the state-action space is assumed to be finite due to the restrictions on states or actions of the robot. According to the definition of the return in terms of the occupancy measure and the total variation distance, the return discrepancy bound can be derived as:
\begin{equation}
\begin{aligned}
    |\eta[\pi] - \eta[\pi^{e}]| &= |\sum_{s,a}\sum_{t=0}^{k_b}\gamma^{t}r(s,a)(p_{t}(s,a)-p_{t}^{e}(s,a))| \\
    &\leq 2r_{\mathrm{max}}\sum_{s,a}\sum_{t=0}^{k_b}\gamma^{t}\frac{1}{2}|p_{t}(s,a)-p_{t}^{e}(s,a)|\\
    &\leq 2r_{\mathrm{max}} \sum_{t=0}^{k_b}\gamma^{t}D_{TV}(p_{t}(s,a)||p_{t}^{e}(s,a)),
\end{aligned}
\end{equation}

Next, according to Lemma B.1 in MBPO~\cite{janner2019trust}, we convert joint distribution to marginal distribution, thus we have:
\begin{equation}
\begin{split}
    D_{TV}(p_{t}(s,a)||p_{t}^{e}(s,a)) \leq D_{TV}(p_{t}(s)||p_{t}^{e}(s)) + \\ \max_{t}E_{s\sim p_{t}(s)}[D_{TV}(\pi_{t}(a|s)||\pi_{t}^{e}(a|s))] 
\end{split}
\end{equation}
Then, let $\xi_{t}=D_{TV}(p_{t}(s)||p_{t}^{e}(s))$, and inspired by Lemma B.1 in BMPO~\cite{lai2020bidirectional}, we have:
\begin{equation}
\begin{aligned}
    \xi_{t} \leq & E_{(s^{\prime},a)\sim p_{t+1}(s^{\prime},a)}[D_{TV}(p(s|s^{\prime},a)||p^{e}(s|s^{\prime},a))] \\
    & + D_{TV}(p_{t+1}(s^{\prime},a)||p_{t+1}^{e}(s^{\prime},a))
\end{aligned}
\end{equation}
Here, we make the assumption without loss of generality:
\begin{equation}
    \epsilon_{m} \geq \max_{t}E_{(s^{\prime}, a)\sim{p_{t}(s^{\prime},a)}}[D_{TV}(p(s|s^{\prime},a)||p^{e}(s|s^{\prime},a))]
\end{equation}
After that, we can iteratively do the above decomposition to obtain:
\begin{equation}
\begin{split}
    \xi_{t} &\leq (\epsilon_{m}+\epsilon_{\pi}) + \xi_{t+1} 
            \leq (\epsilon_{m}+\epsilon_{\pi})(k_{b}-t) + \xi_{k_{b}} \\
            &\leq k_{b}(\epsilon_{m}+\epsilon_{\pi})
\end{split}
\end{equation}
where $\xi_{k_{b}}=0$, because the real states sampled from $D_{hs}^{b}$ are used as the starting states for the backward rollouts.

Finally, the upper bound of discrepancy between $\eta[\pi]$ and $\eta[\pi^{e}]$ can be described as:
\begin{equation}
    |\eta[\pi] - \eta[\pi^{e}]| \leq \frac{2r_{\mathrm{max}}(1-\gamma^{k_b+1})}{1-\gamma}(\epsilon_{\pi}+k_b(\epsilon_{\pi}+\epsilon_{m}))
\end{equation}
\end{proof}

\begin{table}[h]
\centering
\caption{Hyperparameter settings in BIFRL.}
\begin{tabular}{c|c|c|c}
\toprule[1.0pt]
Environment & $k_b$ & $k_f$ & $e_{\alpha}$ \\
\midrule[0.5pt]
Pendulum & $1\to3~|~1\to5$ & $1\to5~|~1\to5$ & 10 \\
\midrule[0.5pt]
Hopper & $1\to10~|~20\to150$ & $1\to15~|~20\to150$ & 20 \\
\midrule[0.5pt]
Hopper-NT & $1\to10~|~20\to150$ & $1\to15~|~20\to150$ & 20 \\
\midrule[0.5pt]
Walker2D & 1 & 1 & 40 \\
\midrule[0.5pt]
Walker2D-NT & 1 & 1 & 40 \\
\midrule[0.5pt]
Ant & $1\to16~|~20\to100$ & $1\to25~|~20\to100$ & 40 \\
\bottomrule[1.0pt]
\end{tabular}
\label{table1}
\end{table}

\subsection{Hyperparameters}
Table~\ref{table1} provides the unique hyperparameters in BIFRL. $x \to y~|~a \to b$ indicates clipped linear function, \emph{i.e.} for epoch $e$, $f(e)=clip((x+\frac{e-a}{b-a}(x-y)), x, y)$. $\alpha$ increases from 0.2 to 0.95 with the epoch changing from 1 to $e_{\alpha}$. $\beta$ is set as 0.7 in all the domains. $\lambda$ is 0.0001 in most domains except Pendulum where it is 0.001. The ratio between states from $\mathcal{D}_{env}$ and those from VGAN is 1. Other hyperparameters not listed here are the same as those in MBPO~\cite{lai2020bidirectional}. Additionally, the network architectures and the experimental settings in value-regularized GAN are the same as those in Goal GAN~\cite{florensa2018automatic} that is also built on LSGAN~\cite{mao2017least}.

\begin{algorithm}[t]
	\caption{BIFRL Algorithm}
	\label{algorithm}
	\textbf{Initialization}: policy $\pi_{\phi}$, backward policy $\tilde{\pi}_{\phi^{\prime}}$, forward model $p_{\theta}$, backward model $\tilde{p}_{\theta^{\prime}}$, and VGAN (generator, discriminator)
	\begin{algorithmic}[1]
		\FOR{$N~epochs$}
		\STATE Collect data from environment; add to $\mathcal{D}_{env}$
		\STATE Train $p_{\theta}$, $\tilde{p}_{\theta^{\prime}}$, $\tilde{\pi}_{\phi^{\prime}}$ and VGAN using $\mathcal{D}_{env}$ via applying gradient descent on Equation~\ref{eq11},~\ref{eq12},~\ref{eq13},~\ref{eq14} and~\ref{eq15}
		\STATE Aggregate states from $\mathcal{D}_{env}$ and VGAN
		\STATE Sample top $\mathcal{K}$ percent of high-value states from aggregated states; add to $\mathcal{D}_{hs}^{b}$
		\STATE Sample high-value states from aggregated states according to Equation~\ref{eq7}; add to $\mathcal{D}_{hs}^{f}$
		\FOR{$M_{1}~backward~rollouts$}
		\STATE Perform $k_b$ steps backward rollouts; add to $\mathcal{D}_{b}$ 
		\ENDFOR
		\FOR{$G_{1}~gradient~updates$}
		\STATE Update policy on $\mathcal{D}_{b}$ by optimizing Equation~\ref{eq16}
		\ENDFOR
		\FOR{$M_{2}~forward~rollouts$}
		\STATE Perform $k_f$ steps forward rollouts; add to $\mathcal{D}_{f}$ 
		\ENDFOR
		\FOR{$G_{2}~gradient~updates$}
		\STATE Update policy on $\mathcal{D}_{f}$ by optimizing Equation~\ref{eq17}
		\ENDFOR
		\ENDFOR
	\end{algorithmic}
\end{algorithm}

\bibliographystyle{IEEEtran}
%\bibliography{bibfile}

\end{document}